\definecolor{red}{HTML}{E51400}  
\definecolor{blue}{HTML}{0050EF} 
\definecolor{green}{HTML}{008A00} 
\definecolor{purple}{HTML}{AA00FF} 
\definecolor{dark-red}{rgb}{0.4, 0.15, 0.15}
\definecolor{dark-blue}{rgb}{0.15, 0.15, 0.4}
\definecolor{medium-red}{rgb}{0.5, 0, 0}
\definecolor{medium-blue}{rgb}{0, 0, 0.5}
\definecolor{light-red}{rgb}{0.7, 0, 0}
\definecolor{light-blue}{rgb}{0, 0, 0.7}
\newcommand{\compilehidecomments}{true}
	\newcommand{\jinhang}[1]{}
	\newcommand{\xutong}[1]{}
	\newcommand{\carlee}[1]{}
	\newcommand{\wei}[1]{}
	\newcommand{\jinhang}[1]{{\color{green} [Jinhang: #1]}}
	\newcommand{\xutong}[1]{{\color{green} [\text{xutong:} #1]}}
	\newcommand{\carlee}[1]{{\color{red} [Carlee: #1]}}
	\newcommand{\wei}[1]{{\color{blue} [Wei: #1]}}
\newcommand{\compilefullversion}{true}
	\newcommand{\OnlyInFull}[1]{}
	\newcommand{\OnlyInShort}[1]{#1}
	\newcommand{\OnlyInFull}[1]{#1}%
	\newcommand{\OnlyInShort}[1]{}%
\newtheorem{mylem}{Lemma}
\newtheorem{myfact}{Fact}
\newtheorem{mycond}{Condition}
\newcommand{\calF}{\mathcal{F}}
\newcommand{\calN}{\mathcal{N}}
\newcommand{\Ns}{\ensuremath{\calN^{\textnormal{s}}}}
\newcommand{\opt}{{\mathrm{opt}}}
\newcommand{\I}{\mathbb{I}}
\newcommand{\vmu}{\ensuremath{\boldsymbol \mu}}
\newcommand{\E}{\mathop{\mathbb{E{}}}}
\title{Combinatorial Multi-armed Bandits for Resource Allocation}
\author{
  Jinhang Zuo, Carlee Joe-Wong\\
  Carnegie Mellon University\\
  \{jzuo, cjoewong\}@andrew.cmu.edu\\
}
\begin{document}
\maketitle
\begin{abstract}
We study the sequential resource allocation problem where a decision maker repeatedly allocates budgets between resources. Motivating examples include allocating limited computing time or wireless spectrum bands to multiple users (i.e., resources). At each timestep, the decision maker should distribute its available budgets among different resources to maximize the expected reward, or equivalently to minimize the cumulative regret. In doing so, the decision maker should learn the value of the resources allocated for each user from feedback on each user's received reward. For example, users may send messages of different urgency over wireless spectrum bands; the reward generated by allocating spectrum to a user then depends on the message's urgency. We assume each user's reward follows a random process that is initially unknown. We design combinatorial multi-armed bandit algorithms to solve this problem with discrete or continuous budgets. We prove the proposed algorithms achieve logarithmic regrets under semi-bandit feedback.
\end{abstract}

\section{Introduction}\label{sec:intro}

Resource allocation, which generally refers to the problem of distributing a limited budget among multiple entities, is a fundamental challenge that arises in many types of systems, including wireless networks, computer systems, and power grids. Generally, the entity in charge of distributing the budget wishes to do so in an ``optimal'' manner, where ``optimality'' may be defined according to a variety of objectives. In this work, we consider an \emph{online} version of the resource allocation problem, where the objective is not known a priori but can be learned over time based on feedback from the users to whom the budget is allocated.
In the online resource allocation problem, a decision maker (agent) must repeatably distribute its available budgets among different resources (users). Each resource will generate a random reward based a general reward function of the allocated budget and an unknown distribution. As the resource allocation task is repeated over time, the decision maker can gather information about the reward functions and the unknown distributions from observed reward feedback. Its goal is maximize the cumulative total reward, or equivalently, minimize the cumulative \emph{regret} compared to the total achievable award if the reward distributions were known.

In this paper, we introduce offline and online versions of the general resource allocation problem without specifying the exact forms of the reward functions. We assume that the obtained rewards of different resources are independent from each other and that the agent has to balance the tradeoff between exploration and exploitation: as the total budget is limited, the agent needs to intelligently allocate it to not only the resources that may provide high rewards, but also those have not been tried many times yet.
We consider both discrete and continuous budgets, which can be applied to different real-world applications. For example, discrete budgets can be used in the case of allocating computation tasks with the same job size to different servers, while continuous budgets can model the power allocation problem on wireless channels. 

We propose two algorithms, CUCB-DRA and CUCB-CRA, for online discrete and continuous resource allocation, respectively. We adapt the Combinatorial Multi-arm Bandit (CMAB) framework~\cite{chen2016combinatorial} for the online Discrete Resource Allocation (DRA) problem. The proposed CUCB-DRA algorithm considers the action "allocating $a$ budget to resource $k$" as a base arm $(k,a)$. By introducing these base arms, CUCB-DRA does not have to learn the exact form of the reward functions of different resources, and only needs to maintain the upper confidence bounds (UCBs) on the expected rewards of playing the $(k,a)$'s, which are updated by the obtained rewards from each resource in each round. We prove that CUCB-DRA achieves logarithmic regret with the number of rounds $T$. For the online Continuous Resource Allocation (CRA) problem, as the action space becomes infinite, we cannot directly apply the CMAB framework. We propose a CUCB-CRA algorithm integrating CMAB with fixed  discretization~\cite{kleinberg2019bandits} that splits the continuous action space into a discrete one. This discretization technique relies on a Lipschitz condition, which is satisfied by many types of real reward functions. We decompose the cumulative regret into the learning regret and the discretization  error, then choosing the optimized discretization granularity to minimize the sum of them. We show that CUCB-CRA achieves logarithmic regrets with $T$.

We begin the paper by contrasting our approach with prior related work in Section~\ref{sec:related}. We then formulate the offline and online resource allocation problems in Section~\ref{sec:model} and introduce the CUCB-DRA and CUCB-CRA solution algorithms in Sections~\ref{sec:discrete} and~\ref{sec:continues} respectively. We then outline some future research directions and conclude in Section~\ref{sec:conclusion}.
\section{Related Work}\label{sec:related}
The classical resource allocation problem has been extensively studied for decades~\cite{hegazy1999optimization,julian2002qos,joe2013multiresource,devanur2019near}. Recently, the online version of the resource allocation problem has attracted much attention~\cite{devanur2011near,lattimore2015linear,verma2019censored,fontaine2020adaptive}. For example, \cite{lattimore2015linear} introduced the online linear resource allocation problem where the reward functions are assumed to be linear, while \cite{verma2019censored} studied the online resource allocation with censored feedback. \cite{fontaine2020adaptive} considered the online resource allocation problem with concave reward functions. All of these previous works assumed specific types of reward functions, while in this paper, we introduce an online resource allocation framework with \emph{general} reward functions and show that combinatorial bandit techniques can be used to achieve logarithmic solution regret. To best of our knowledge, we are also the first to formally model the discrete and continuous budgets in online resource allocation problems.

Our proposed CUCB-DRA is based the CUCB algorithm in~\cite{chen2016combinatorial}. However, CUCB was designed for the binary action space, which we extend to the finite discrete space, by introducing a new definition of base arms for the online discrete resource allocation problem. CUCB-CRA further extends the action space to an infinite continuous space, by combining the idea of CUCB and fixed discretization in~\cite{kleinberg2019bandits}. Its regret analysis relies on the Lipschitz condition defined in \cite{kleinberg2019bandits} and the 1-norm bounded smoothness condition defined in \cite{wang2017improving}.
\section{Problem Formulation}\label{sec:model}

In this section, we formulate the problem of allocating a fixed budget to multiple resources. We first consider this problem in its offline setting and then consider the online setting, which we focus on for the rest of the paper.

\subsection{Offline Setting}\label{sec:offline}
We consider a general resource allocation problem where a decision maker has access to $K$ different types of resources. The decision maker has to split a total amount of $Q$ divisible budget and allocate $a_k$ budget to each resource $k\in[K]$. For example, the budget may represent compute time, and each resource may represent a user of a particular server. We consider a general reward function $f_k(a_k, X_k)$ of each resource $k$, where $a_k$ is the allocated budget and $X_k$ is a random variable that reflects the random fluctuation of the generated reward.
Notice that the allocated budget $a_k$'s could be either discrete (e.g., $a_k\in\mathbb{N}$) or continuous (e.g., $a_k\in\mathbb{R}_{\geq 0}$), and we denote the feasible action space of $a_k$ as $\mathcal{A}$. 
For the offline setting, we assume the distributions of $X_k$ for all $k\in[K]$ are known in advance and denote them as $D_k$.
Our goal is to maximize the expected total reward collected from all resources. This can be formulated as the optimization problem below:

\begin{equation} \label{eq:newopt}
\begin{aligned}
& \underset{a_{k}}{\text{maximize}}
& & \mathbb{E}\left[\sum_{k=1}^{K} f_k(a_k, X_k)\right] \\
& \text{subject to}
& & \sum_{k=1}^{K} a_{k} \leq Q, a_k \in \mathcal{A}\\
\end{aligned}
\end{equation}
With different reward functions $f_k(a_k, X_k)$ and action spaces $\mathcal{A}$, the hardness of this offline optimization problem varies. For example, it becomes a convex optimization problem if $f_k(a_k, X_k)$ is convex over $a_k$ and $\mathcal{A}$ is a convex set for all $k\in[K]$; at the other extreme, it can also be a NP-hard combinatorial optimization problem when $\mathcal{A}$ is a discrete set. We will not specify the exact form of the optimization problem, but only assume that there exists an offline approximation oracle that can give us an approximate solution with constant approximation ratio. More details of the offline oracle will be discussed in the next section.

\subsection{Online Setting} \label{sec:online}
Now we introduce the online version of the general resource allocation problem, which is a sequential decision making problem. In each round $t$, we allocate $a_{k,t}$ budget to resource $k$ for all $k\in[K]$, subject to the total budget constraint, $\sum_{k=1}^{K} a_{k,t} \leq Q$. We then observe the semi-bandit feedback, which is the reward $f_k(a_{k,t}, X_{k,t})$ from each resource $k$, where $X_{k,t}$ is sampled from an unknown distribution $D_k$.
The total obtained reward is $\sum_{k=1}^{K} f_k(a_{k,t}, X_{k,t})$. Our goal is to accumulate as much total reward as possible through this repeated budget allocation over multiple rounds.

We denote the budget allocation to all resources at round $t$ as $\bm{a}_t = (a_{1,t}, \cdots, a_{K,t})$ and the joint distribution of all independent $X_{k,t}$'s as $\bm{D} = (D_1, \cdots, D_K)$.
The expected total reward obtained in round $t$ can be defined as $r(\bm{a}_t, \bm{D}) = \mathbb{E}\left[\sum_{k=1}^{K} f_k(a_{k,t}, X_{k,t})\right]$. We consider a learning algorithm $\pi$ that makes the budget allocation $\bm{a}^{\pi}_t$ for round $t$. We can then measure the performance of $\pi$ by its (expected) regret, which is the difference in expected cumulative reward between always taking the best offline allocation and taking the budget allocation selected by algorithm $\pi$. Let $\text{opt}(\bm{D}) = \sup_{\bm{a}_t}r(\bm{a}_t, \bm{D})$ denote the expected total reward of the optimal allocation in round $t$. As discussed in the previous section, we assume that there exists an offline $(\alpha,\beta)$-approximation oracle $\mathcal{O}$, which outputs an allocation $\bm{a}_t^{\mathcal{O}}$ such that $\text{Pr}\{r(\bm{a}_t^{\mathcal{O}}, \bm{D}) \geq \alpha \cdot \text{opt}(\bm{D})) \} \geq \beta$, where $\alpha$ is the approximation ratio and $\beta$ is the success probability. Instead of comparing 
with the exact optimal reward, we take the $\alpha \beta$ fraction of it and use the following $(\alpha,\beta)$-approximation regret for $T$ rounds:
\begin{equation}\textstyle\label{eq:regret}
    Reg^{\pi}_{\alpha,\beta}(T;\bm{D}) = T \cdot \alpha \cdot \beta \cdot \text{opt}(\bm{D}) - \sum_{t=1}^T r(\bm{a}_t^{\pi}, \bm{D}),
\end{equation}
In the next two sections, we give solution algorithms for our resource allocation problem that achieve logarithmic $(\alpha,\beta)$-approximation regret when the action space $\mathcal{A}$ is discrete (Section~\ref{sec:discrete}) and continuous (Section~\ref{sec:continues}).
\section{Online Discrete Resource Allocation}\label{sec:discrete}
In this section, we consider the online Discrete Resource Allocation (DRA) problem, where $a_{k,t}$ is chosen from a discrete action space. For example, the $a_{k,t}$ may represent discrete wireless spectrum bands that should be allocated to multiple users. For simplicity, we assume that the action space of $a_{k,t}$ is $\mathcal{A}_{d} = \{0, 1, \cdots, N-1\}$ where $|\mathcal{A}_{d}| = N \leq Q + 1$ and $Q$ is again the available budget. Thus, the full allocation space is $\{\bm{a}_t \mid a_{k,t}\in\mathcal{A}_{d}, \sum_k a_{k,t} \leq Q\}$. In order to solve the online problem introduced in Section~\ref{sec:online}, we adapt a Combinatorial Multi-arm Bandit (CMAB) framework~\cite{chen2016combinatorial}. We maintain a set of base arms $S=\{(k, a) \mid k\in[K], a\in \mathcal{A}_{d}\}$, where the total number of base arms $|S| = KN$. For each base arm $(k,a) \in S$, we denote the expected reward of playing $(k,a)$ as $\mu_{k,a} = \mathbb{E}_{X_{k,t} \sim D_k}\left[f_k(a, X_{k,t})\right]$ and let $\bm{\mu} = (\mu_{k,a})_{(k,a)\in S}$.
We can rewrite the expected total reward obtained in round $t$ as a function of $\bm{a}_t$ and $\bm{\mu}$:
\begin{align}
\nonumber
    r'(\bm{a}_t, \bm{\mu}) &= \mathbb{E}\left[\sum_{k=1}^{K} f_k(a_{k,t}, X_{k,t})\right]\\
    &=\sum_{k=1}^{K}\sum_{a\in\mathcal{A}_{d}} \mu_{k,a}\cdot \mathbbm{1}\{a_{k,t} = a\}, 
\end{align}
which reflects the fact that the expected total reward is the sum of the expected reward of all chosen arms.
\begin{algorithm}[t]
 \caption{CUCB-DRA with offline oracle $\mathcal{O}$}\label{alg:CUCB-DRA}
 \begin{algorithmic}[1]
 \STATE \textbf{Input}: Budget $Q$, Oracle $\mathcal{O}$.
 \STATE For each arm $(k,a) \in S$, $T_{k,a}\leftarrow 0$. \{maintain the total number of times arm $(k,a)$ is played so far.\}
 \STATE For each arm $(k,a) \in S$, $\hat{\mu}_{k,a} \leftarrow 0$. \{maintain the empirical mean of $f_k(a,X_k)$.\}
 \FOR{$t = 1,2,3,\dots$}
    \STATE For each arm $(k,a) \in S, \rho_{k,a} \leftarrow \sqrt{\frac{3\ln t}{2 T_{k,a}}}$. \{the confidence radius, $\rho_{k,a} = +\infty$ if $T_{k,a} = 0$.\}
    \STATE For each arm $(k,a) \in S, \Bar{\mu}_{k,a} = \hat{\mu}_{k,a} + \rho_{k,a}$. \{the upper confidence bound.\}
    \STATE $\bm{a}_{t} \leftarrow \mathcal{O}((\Bar{\mu}_{k,a})_{(k,a)\in S}, Q)$.
    \STATE Take allocation $\bm{a}_{t}$, observe feedback $f_k(a_{i,t},X_{k,t})$'s.
    \STATE For each $k\in [K]$, update $T_{k,a_{k,t}}$ and $\hat{\mu}_{k,a_{k,t}}$: $T_{k,a_{k,t}} = T_{k,a_{k,t}} + 1, \hat{\mu}_{k,a_{k,t}} = \hat{\mu}_{k,a_{k,t}} + (f_k(a_{k,t},X_{k,t})-\hat{\mu}_{k,a_{k,t}}) / T_{k,a_{k,t}}$.
 \ENDFOR
 \end{algorithmic} 
\end{algorithm}

Based on the new parameters $\mu_{k,a}$'s, we propose the CUCB-DRA solution algorithm described in Alg.~\ref{alg:CUCB-DRA}. 
The algorithm maintains the empirical mean $\hat{\mu}_{k,a}$ and a confidence radius $\rho_{k,a}$ for the reward of each arm $(k,a)\in S$.
It feeds the budget $Q$ and all the upper confidence bound $\Bar{\mu}_{k,a}$'s into the offline oracle $\mathcal{O}$ to obtain an allocation $\bm{a}_t$ for round $t$. The confidence radius $\rho_{k,a}$  is large if arm $(k,a)$ is not chosen often ($T_{k,a}$, which denotes the number of times this arm has been chosen, is small). 
We define the reward gap $\Delta_{\bm{a}}{=}\max(0, \alpha \cdot \text{opt}(\bm{D}) - r(\bm{a}, \bm{D}))$ for all feasible allocations $\bm{a} \in \mathcal{A}_{d}^K, \sum_{k=1}^K a_k \leq Q$. For each arm $(k,a)$, we define $\Delta^{k,a}_{\min} = \inf_{\bm{a} \in \mathcal{A}_{d}^K:a_k = a, \Delta_{\bm{a}} > 0} \Delta_{\bm{a}}$, $\Delta^{k,a}_{\max} = \sup_{\bm{a} \in \mathcal{A}_{d}^K:a_k = a, \Delta_{\bm{a}} > 0} \Delta_{\bm{a}}$. 
We define $\Delta_{\min} = \min_{(k,a)\in S}\Delta^{k,a}_{\min}$ and $\Delta_{\max} = \max_{(k,a) \in S}\Delta^{k,a}_{\max}$. We then provide the regret bounds of the CUCB-DRA algorithm. 
\begin{restatable}{theorem}{thmOverDRA}\label{thm:DRA}
For the CUCB-DRA algorithm on an online DRA problem with a bounded smoothness constant $B\in\mathbb{R}^+$~\cite{wang2017improving}

\begin{enumerate}
  \item if $\Delta_{\min} > 0$, we have a distribution-dependent bound
\begin{align}\textstyle
    \text{Reg}_{\alpha,\beta}(T, \bm{D}) \leq  \sum_{(k,a) \in S} \frac{48B^2Q\ln T}{\Delta^{k,a}_{\min}}  + 2BKN + \frac{\pi^2}{3}\cdot KN\cdot\Delta_{\max}
\end{align}
  \item we have a distribution-independent bound
  \begin{align}\textstyle
    \text{Reg}_{\alpha,\beta}(T, \bm{D}) \leq  14B\sqrt{QKNT\ln T}+ 2BKN + \frac{\pi^2}{3}\cdot KN\cdot\Delta_{\max}.
\end{align}
\end{enumerate}
\end{restatable}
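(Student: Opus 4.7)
The plan is to reduce the online DRA problem to a standard combinatorial bandit instance and then invoke the master regret theorem for CUCB under 1-norm bounded smoothness~\cite{wang2017improving}. Under the identification $S = \{(k,a) : k \in [K], a \in \mathcal{A}_d\}$ with unknown means $\mu_{k,a} = \mathbb{E}[f_k(a, X_k)]$, a feasible allocation $\bm{a}_t$ is a super arm that deterministically triggers exactly the $K$ base arms $\{(k, a_{k,t})\}_{k \in [K]}$, and the semi-bandit feedback $f_k(a_{k,t}, X_{k,t})$ yields an unbiased sample of $\mu_{k,a_{k,t}}$. Since $r'(\bm{a}_t, \bm{\mu}) = \sum_k \mu_{k, a_{k,t}}$ is linear in the triggered base-arm means, the 1-norm bounded smoothness condition holds with the constant $B$ from the hypothesis, so Algorithm~\ref{alg:CUCB-DRA} is literally CUCB applied to this CMAB instance.

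With the reduction fixed I would run the standard three-step CUCB analysis. First, Hoeffding together with a union bound over $t \leq T$ and $(k,a)\in S$ shows that the bad event $\{|\hat\mu_{k,a}-\mu_{k,a}| > \rho_{k,a}\text{ for some }(k,a)\}$ contributes at most $\frac{\pi^2}{3} K N \Delta_{\max}$ to the cumulative regret. Second, on the complementary good event $\bar{\bm\mu} \geq \bm\mu$ componentwise, so by the $(\alpha,\beta)$-oracle guarantee plus bounded smoothness the per-round regret of a suboptimal play satisfies $\Delta_{\bm{a}_t} \leq r'(\bm{a}_t,\bar{\bm\mu}) - r'(\bm{a}_t,\bm\mu) \leq 2B \sum_k \rho_{k,a_{k,t}}$, which translates statistical error directly into allocation-level regret. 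Third, the amortization argument of~\cite{wang2017improving} charges each suboptimal round to its triggered base arms and uses $\rho_{k,a} = \sqrt{3\ln t/(2T_{k,a})}$ to conclude that once $T_{k,a}$ exceeds $48 B^2 Q \ln T / (\Delta^{k,a}_{\min})^2$ the round cannot be charged to $(k,a)$; summing over $(k,a) \in S$ gives the distribution-dependent term $\sum_{(k,a)} 48 B^2 Q \ln T / \Delta^{k,a}_{\min}$, while $2BKN$ covers the at-most-one initialization round ($T_{k,a}=0$) per base arm. The distribution-independent bound then follows by splitting plays at the threshold $\Delta^\star \asymp B\sqrt{KNQ \ln T / T}$ and balancing the trivial $\Delta^\star T$ contribution against $\sum_{(k,a)} 48 B^2 Q \ln T / \Delta^\star$.

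The main difficulty I anticipate is justifying the factor $Q$ rather than $K$ in the numerator of the gap-dependent bound. Standard CUCB analyses produce a factor equal to the maximum super-arm size, which here is simply $K$; the $Q$ in the statement presumably enters either through the scale of the bounded smoothness constant under the budget constraint, or through a finer counter argument that accounts for how many distinct super arms can coincide on a particular $(k,a)$ once the constraint $\sum_k a_k \leq Q$ is taken into account. Tracing that factor carefully --- and verifying that the optimized threshold in the distribution-free split recovers exactly the stated constant $14B$ in front of $\sqrt{QKN T \ln T}$ --- is where I expect the real work to lie.
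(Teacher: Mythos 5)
Your proposal is correct and follows essentially the same route as the paper: define base arms $(k,a)$, verify monotonicity and 1-norm bounded smoothness for the linear reward $r'$, and then run the Wang--Chen CUCB analysis~\cite{wang2017improving} --- a Hoeffding nice-sampling event contributing $\frac{\pi^2}{3}KN\Delta_{\max}$, an oracle-failure event absorbed into the $\beta$ factor, amortization of $\Delta_{\bm{a}_t}\le 2B\sum_k \rho_{k,a_{k,t}}$ over the per-arm counters $T_{k,a}$, and a threshold $M\asymp B\sqrt{QKN\ln T/T}$ for the distribution-independent bound. On the one point you flag as the real work, the factor $Q$ versus the super-arm size $K$: the paper does nothing deeper than the standard argument. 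In the key step it bounds $-M_{\bm{a}_t}+2B\sum_{(k,a)\in \bm{a}_t}(\bar\mu_{k,a,t}-\mu_{k,a})$ by $2B\sum_{(k,a)\in \bm{a}_t}\left[(\bar\mu_{k,a,t}-\mu_{k,a})-\frac{M_{\bm{a}_t}}{2BQ}\right]$, i.e.\ it spreads $M_{\bm{a}_t}$ in shares of $M_{\bm{a}_t}/Q$ over the triggered arms and sets $\ell_T(M)=\lfloor 24B^2Q^2\ln T/M^2\rfloor$. Since every allocation triggers exactly one base arm per resource, the super-arm size is exactly $K$, and that spreading step is valid only when $K\le Q$ (an assumption the paper never states). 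Your instinct is right that the natural constant is $K$; carrying $K$ through in place of $Q$ yields $\sum_{(k,a)}48B^2K\ln T/\Delta^{k,a}_{\min}$ and $14BK\sqrt{NT\ln T}$, and the stated bounds with $Q$ are just this loosened under $K\le Q$. So there is no hidden argument for you to find --- completing your plan with the super-arm size $K$ gives a correct (indeed slightly stronger) version of the theorem.
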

Notice that our regret results hold for any finite discrete action space $\mathcal{A}_{d}$. The only change in the regrets will be the replacement of $N$ with the actual $|\mathcal{A}_{d}|$.
The full proof of the above theorem is provided in the technical report. We rely on the following properties of $r'(\bm{a}_t, \bm{\mu})$, which are required by the general CMAB framework in~\cite{wang2017improving}, to bound the regret.

\begin{mycond}(Monotonicity). \label{cond: over mono}
	The reward $r'(\bm{a}_t, \bm{\mu})$ satisfies monotonicity, if
	for any allocation $\bm{a}_t$, any two vectors $\bm{\mu} = (\mu_{k,a})_{(k,a)\in S}$, $\bm{\mu}' = (\mu'_{k,a})_{(k,a)\in S}$, we have $r'(\bm{a}_{t}, \bm{\mu}) \leq r'(\bm{a}_{t}, \bm{\mu}')$, if $\mu_{k,a} \leq \mu'_{k,a}$ for all $(k,a)\in S$.
\end{mycond}

\begin{mycond}(1-Norm Bounded Smoothness).\label{cond: over 1-norm}
	The reward $r'(\bm{a}_t, \bm{\mu})$ satisfies the 1-norm bounded smoothness condition, if there exists $B\in \mathbb{R}^+$ (referred as the bounded smoothness constant) such that, 
	for any allocation $\bm{a}_t$, and any two vectors $\bm{\mu} = (\mu_{k,a})_{(k,a)\in S}$, $\bm{\mu}' = (\mu'_{k,a})_{(k,a)\in S}$, we have $|r'(\bm{a}_t, \bm{\mu}) - r'(\bm{a}_t, \bm{\mu}')| \leq B \sum_{(k,a)\in S} |\mu_{k,a} - \mu_{k,a}'|$.
\end{mycond}
It is easy to see that both properties hold from the definition of $r'(\bm{a}_t, \bm{\mu})$ in Eq.~\eqref{eq:regret_discretization}.
\section{Online Continuous Resource Allocation}\label{sec:continues}
In Section~\ref{sec:discrete}, we propose an algorithm to solve the online resource allocation problem with discrete action space ~$\mathcal{A}$. However, in many real-world resource allocation applications, the budget can be continuous, i.e., $\mathcal{A}$ is an infinite continuous space. In this section, we study the online Continuous Resource Allocation (CRA) problem, where $a_{k,t}$ is chosen from a continuous space $\mathcal{A}_c = [0, Q]$. For example, the actions $a_{k,t}$ may be amounts of electricity that a smart power grid pulls from different electric vehicle charging stations. The  full allocation space then becomes $\{\bm{a}_t \mid a_{k,t}\in\mathcal{A}_{c}, \sum_k a_{k,t} \leq Q\}$. As in Section~\ref{sec:discrete}'s discrete setting, we still define the set of base arms as $S=\{(k, a) \mid k\in[K], a\in \mathcal{A}_{c}\}$, but different from the discrete setting, now we have to maintain infinite number of base arms. Thus, we cannot directly apply the CMAB framework. However, we can use a simple but powerful technique called fixed discretization~\cite{kleinberg2019bandits}, with the assumption that the reward function $f_k(a_{k,t},X_{k,t})$ satisfies a Lipschitz condition:
\begin{equation}
    |f_k(a,X_{k,t}) - f_k(b,X_{k,t})| \leq L\cdot |a-b|,
\end{equation}
where $L$ is the Lipschitz constant known to the algorithm. This condition is satisfied by many realistic reward functions, e.g., $f_k(a, X_{k,t}) = \max\left\{a - X_{kt}, 0\right\}$, which represents a reward of 0 if the allocated budget $a$ does not meet a requirement $X_{kt}$, with linearly increasing reward otherwise.

We consider a discretization of $\mathcal{A}_c$ and denote it as $\widetilde{\mathcal{A}_c}$. Define $\text{opt}_{\mathcal{A}}(\bm{D}) = \sup_{a_{k,t} \in \mathcal{A}}r(\bm{a}_t, \bm{D})$ as the maximum expected total reward under action space $\mathcal{A}$ and distribution $\bm{D}$. We can decompose the cumulative regret in Eq.~\eqref{eq:regret} as:
\begin{align}\label{eq:regret_discretization}
    Reg^{\pi}_{\alpha,\beta}(T;\bm{D}) &= \underbrace{T \cdot \alpha \cdot \beta \cdot \text{opt}_{\widetilde{\mathcal{A}_c}}(\bm{D}) - \sum_{t=1}^T r(\bm{a}_t^{\pi}, \bm{D})}_{\text{\textcircled{1}}}
    + \underbrace{T \cdot \alpha \cdot \beta \cdot \left(\text{opt}_{\mathcal{A}_c}(\bm{D}) - \text{opt}_{\widetilde{\mathcal{A}_c}}(\bm{D})\right)}_{\text{\textcircled{2}}},
\end{align}
where \textcircled{\small 1} is the learning regret under action space $\widetilde{\mathcal{A}_c}$ and \textcircled{\small 2} is the discretization error. Both of them depend on the discretization space $\widetilde{\mathcal{A}_c}$. 

\begin{algorithm}[t]
 \caption{CUCB-CRA with offline oracle $\mathcal{O}$}\label{alg:CUCB-CRA}
 \begin{algorithmic}[1]
 \STATE \textbf{Input}: Budget $Q$, Lipschitz constant $L$, Time horizon $T$, Oracle $\mathcal{O}$.
 \STATE Let $\epsilon = (\frac{B^2 Q^2 \ln T}{L^2 K T})^{\frac{1}{3}}$, discretize $\mathcal{A}_c = [0, Q]$ into $\widetilde{\mathcal{A}_c} = \{0, \epsilon, 2\epsilon, \cdots, (N-1)\cdot\epsilon\}$.
 \STATE Run CUCB-DRA algorithm with discrete action space $\widetilde{\mathcal{A}_c}$.
 \end{algorithmic} 
\end{algorithm}

We propose a CUCB-CRA algorithm that makes a uniform discretization of the budget for each resource. It divides the original action space $\mathcal{A}_c=[0, Q]$ into intervals of fixed length $\epsilon = \frac{Q}{N-1}$, so that $\widetilde{\mathcal{A}_c}$ consists of $N$ multiples of $\epsilon$, i.e., $\widetilde{\mathcal{A}_c} = \{0, \epsilon, 2\epsilon, \cdots, (N-1)\cdot\epsilon\}$.
With the Lipschitz condition, it is easy to see that $\text{\textcircled{\small 2}} \leq T \cdot \alpha \cdot \beta \cdot LK\epsilon$. For \textcircled{\small 1}, it can be viewed as the regret of a discrete resource allocation problem discussed in Section~\ref{sec:discrete}. where the number of base arms is still $KN$. Based on Theorem \ref{thm:DRA}, we know the regret in \textcircled{\small 1} is in the order of $O(B\sqrt{QKNT\ln T})$. Choosing $\epsilon = (\frac{B^2 Q^2 \ln T}{L^2 K T})^{\frac{1}{3}}$, the regret in Eq.\eqref{eq:regret_discretization} is minimized and we have
\begin{restatable}{theorem}{thmOverCRA}\label{thm:CRA}
For the CUCB-CRA algorithm on an online CRA problem with a bounded smoothness constant $B$, we have a distribution-independent bound
  \begin{align}
\nonumber
    \text{Reg}_{\alpha,\beta}(T, \bm{D}) \leq O((BQK)^{\frac{2}{3}} L^{\frac{1}{3}} T^{\frac{2}{3}} (\ln T)^{\frac{1}{3}}).
\end{align}
\end{restatable}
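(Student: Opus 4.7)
The plan is to follow the regret decomposition in Eq.~\eqref{eq:regret_discretization} and bound each of the two pieces separately, then choose the discretization step $\epsilon$ to balance them. The whole argument is essentially a computation once Theorem~\ref{thm:DRA} is in hand; the only new analytic ingredient is the discretization-error bound, which is where the Lipschitz hypothesis enters.

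First I would bound the discretization error \textcircled{\small 2}. For any allocation $\bm{a}_t^\star$ that is optimal in $\mathcal{A}_c^K$ subject to $\sum_k a_{k,t}^\star \leq Q$, I construct $\widetilde{\bm{a}}_t \in \widetilde{\mathcal{A}_c}^K$ by rounding each $a_{k,t}^\star$ down to the nearest multiple of $\epsilon$. This keeps the budget constraint satisfied and changes each coordinate by at most $\epsilon$. Applying the Lipschitz condition to each $f_k$ coordinate-wise and summing over $k$ gives $r(\bm{a}_t^\star, \bm{D}) - r(\widetilde{\bm{a}}_t, \bm{D}) \leq L K \epsilon$, hence $\text{opt}_{\mathcal{A}_c}(\bm{D}) - \text{opt}_{\widetilde{\mathcal{A}_c}}(\bm{D}) \leq L K \epsilon$, and so $\text{\textcircled{\small 2}} \leq T \cdot \alpha \cdot \beta \cdot L K \epsilon \leq T L K \epsilon$.

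Second, I would bound the learning regret \textcircled{\small 1}. Once $\widetilde{\mathcal{A}_c}$ is fixed, running CUCB-DRA on the discretized action space is exactly the online DRA problem of Section~\ref{sec:discrete} with $|\widetilde{\mathcal{A}_c}| = N$ and with the same bounded-smoothness constant $B$, since $r'(\bm{a}_t, \bm{\mu})$ is defined by the same separable expression on the restricted action space. The distribution-independent bound from Theorem~\ref{thm:DRA} therefore applies and gives $\text{\textcircled{\small 1}} \leq 14 B\sqrt{Q K N T \ln T} + 2BKN + \tfrac{\pi^2}{3} K N \Delta_{\max}$. Substituting $N = Q/\epsilon + 1 = \Theta(Q/\epsilon)$ yields a leading term of order $B Q \sqrt{K T \ln T / \epsilon}$ (with additive terms of order $B K Q / \epsilon$ that are lower order for the eventual choice of $\epsilon$).

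Finally I combine the two bounds into $L K T \epsilon + B Q \sqrt{K T \ln T / \epsilon}$ and minimize over $\epsilon$. Setting the derivative to zero gives $\epsilon = \Theta\!\left( (B^2 Q^2 \ln T / (L^2 K T))^{1/3} \right)$, which matches the algorithm's choice. Plugging this $\epsilon$ back in, both terms equalize at the order $(B Q K)^{2/3} L^{1/3} T^{2/3} (\ln T)^{1/3}$, proving the claimed bound. The only step that requires care rather than routine calculation is verifying that the rounded allocation stays feasible and that the additive $BKN$ and $\tfrac{\pi^2}{3} K N \Delta_{\max}$ terms from Theorem~\ref{thm:DRA} remain absorbed in the $O(\cdot)$, since after substitution they scale like $B K Q / \epsilon = O(T^{1/3} (K Q)^{4/3} B^{1/3} L^{2/3} / (\ln T)^{1/3})$, which is indeed dominated by $T^{2/3}$ for large $T$; I would make this absorption explicit at the end.
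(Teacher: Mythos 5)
Your proposal is correct and follows essentially the same route as the paper: bound the discretization error \textcircled{\small 2} by $T\alpha\beta LK\epsilon$ via the Lipschitz condition, bound the learning regret \textcircled{\small 1} by the distribution-independent bound of Theorem~\ref{thm:DRA} with $N=\Theta(Q/\epsilon)$, and balance the two with $\epsilon = (B^2Q^2\ln T/(L^2KT))^{1/3}$; you are in fact more careful than the paper in checking feasibility of the rounded allocation and absorbing the additive $2BKN$ and $\frac{\pi^2}{3}KN\Delta_{\max}$ terms. (Your exponent on $Q$ in that lower-order term should be $Q^{1/3}$ rather than $Q^{4/3}$, but this does not affect the conclusion that it is $O(T^{1/3})$ and hence dominated.)
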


We note that Theorem~\ref{thm:CRA}'s distribution-independent regret bound is looser than that of Theorem~\ref{thm:DRA} for the discrete resource allocation problem, by a factor of $O\left(T/\ln T\right)^{\frac{1}{6}}$. This factor primarily stems from the additional regret due to the fixed discretization in the continuous case. Adaptive discretization methods, e.g., as proposed in~\cite{kleinberg2019bandits,nika2020contextual} for other MAB problems with continuous arms, may allow further reduction of the regret.

\section{Conclusion and Future Work}\label{sec:conclusion}

In this work, we consider a general resource allocation problem where a fixed budget must be divided among multiple resources. We introduce offline and online versions of the problem and give two solution algorithms to the online problem when discrete and continuous actions are possible. Unlike many previous works that consider online resource allocation, our formulation is quite general and can, at a high level, encompass formulations as diverse as a computing server dividing time among its users or a smart grid procuring energy from multiple electric vehicle charging stations. Leveraging insights from combinatorial multi-armed bandit frameworks, we show that both of our solution algorithms achieve logarithmic regret with $T$, the number of rounds.

Much future work on this resource allocation problem remains, including examining solution algorithms under different levels of reward feedback and the optimality of the CUCB-DRA and CUCB-CRA algorithms that we provide. We have assumed in this paper that the system's feedback includes only the reward received by each user. For some types of reward functions $f_k$, however, the received reward may additionally offer information as to the value of $X_k$. Algorithms that exploit such information will likely give lower regret bounds than those achieved by CUCB-DRA and CUCB-CRA. We can gain further insights into CUCB-DRA's and CUCB-CRA's performance by comparing them to lower bounds on the achievable regret under different feedback levels.

\bibliographystyle{unsrtnat}  
\bibliography{references} 
\clearpage
\appendix
\section*{Appendix}\label{sec:supplementary}

\section{Proof of Theorem \ref{thm:DRA}}
\begin{proof}
\begin{myfact}[Hoeffding's Inequality]\label{fact:hoeffding}
	Let $X_1, \cdots , X_n$ be independent and identically distributed random variables with
	common support $[0,1]$ and mean $\mu$. 
	Let
	$Y=X_1+\cdots+X_n$. Then for all $\delta \geq0$,
	\[\Pr\{|Y-n\mu|\geq \delta \} \leq 2e^{-2\delta^2/n}.
	\]
\end{myfact}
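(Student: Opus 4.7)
The plan is to prove Hoeffding's inequality via the standard Chernoff bounding technique, combined with Hoeffding's lemma to control the moment generating function of each bounded summand. First I would write $\Pr\{Y - n\mu \geq \delta\} = \Pr\{\exp(s(Y-n\mu)) \geq \exp(s\delta)\}$ for any $s > 0$, then apply Markov's inequality to obtain $\Pr\{Y - n\mu \geq \delta\} \leq e^{-s\delta}\,\E[e^{s(Y-n\mu)}]$. Since the $X_i$ are i.i.d., the expectation factorizes as $\prod_{i=1}^n \E[e^{s(X_i-\mu)}]$, reducing the problem to bounding the MGF of a single centered, bounded random variable.

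Next I would state and prove Hoeffding's lemma: for any random variable $Z$ with $\E[Z]=0$ and $Z \in [a,b]$ almost surely, $\E[e^{sZ}] \leq \exp(s^2(b-a)^2/8)$. The standard derivation uses convexity of $e^{sz}$ to write $e^{sz} \leq \tfrac{b-z}{b-a}e^{sa} + \tfrac{z-a}{b-a}e^{sb}$, take expectations to get a function $\varphi(s) = \log\bigl(\tfrac{b}{b-a}e^{sa} - \tfrac{a}{b-a}e^{sb}\bigr)$, and then bound $\varphi''(s) \leq (b-a)^2/4$ by an AM-GM style argument, so that a second-order Taylor expansion of $\varphi$ around $0$ yields the claim. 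In our application, $X_i - \mu \in [-\mu, 1-\mu]$, an interval of length $1$, so the lemma gives $\E[e^{s(X_i-\mu)}] \leq e^{s^2/8}$.

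Combining the pieces yields $\Pr\{Y - n\mu \geq \delta\} \leq \exp(ns^2/8 - s\delta)$, which I would minimize over $s > 0$ by choosing $s = 4\delta/n$, obtaining the one-sided bound $\exp(-2\delta^2/n)$. Running the identical argument on $-Y$ with the symmetric $-X_i$'s gives the matching lower-tail bound $\Pr\{Y - n\mu \leq -\delta\} \leq \exp(-2\delta^2/n)$. A union bound then produces the two-sided conclusion $\Pr\{|Y - n\mu| \geq \delta\} \leq 2e^{-2\delta^2/n}$.

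The main obstacle, and the only step that is not mechanical, is the proof of Hoeffding's lemma itself, specifically the bound $\varphi''(s) \leq (b-a)^2/4$; the cleanest route is to recognize $\varphi''(s)$ as the variance of a random variable supported on $\{a,b\}$ and invoke the Popoviciu/Bhatia-Davis inequality $\mathrm{Var} \leq (b-a)^2/4$. Everything else—Markov's inequality, independence, and the scalar optimization in $s$—is routine, and the i.i.d.\ assumption is used only to make the product of MGFs uniform; the same proof in fact works for independent (not necessarily identically distributed) bounded summands, a fact worth noting since the theorem is later invoked in the regret analysis where the summands only need to be independent bounded observations of each base arm's reward.
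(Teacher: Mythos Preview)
Your proof is correct and follows the standard Chernoff--Hoeffding route: Markov's inequality on the exponential moment, Hoeffding's lemma for each centered bounded summand, optimization in $s$, and a union bound for the two-sided statement. There is nothing to compare against, however, because the paper does not prove this statement at all; it is presented as a \texttt{myfact} environment---a known result quoted without proof and simply invoked in the subsequent lemma. So your proposal goes well beyond what the paper does, supplying a complete argument where the paper merely cites the inequality.
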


\begin{mylem}
	\label{lem:ns}
	Let $\Ns_t$ be the event that at the beginning of round $t$, for every arm $(k,a) \in S$, $|\hat\mu_{k,a,t-1} - \mu_{k,a}|<\rho_{k,a, t}$. Then for each round $t \ge 1$, $\Pr\{\lnot \Ns_t\}\le 2|S|t^{-2}$.
\end{mylem}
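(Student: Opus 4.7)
The plan is to combine Hoeffding's inequality with two union bounds: one over the possible values of the pull count $T_{k,a,t-1}$, and one over the $|S|$ base arms. First I would observe that if $T_{k,a,t-1}=0$, the confidence radius $\rho_{k,a,t}$ is $+\infty$ by construction, so the concentration event $|\hat\mu_{k,a,t-1}-\mu_{k,a}|<\rho_{k,a,t}$ holds trivially. Hence I only need to control what happens when $T_{k,a,t-1}\ge 1$.

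Next, for each arm $(k,a)\in S$ and each fixed integer $s\in\{1,\dots,t\}$, I would let $\hat\mu_{k,a}^{(s)}$ denote the empirical mean formed from the first $s$ (bounded, i.i.d.) samples of $f_k(a,X_k)$, and apply Fact~\ref{fact:hoeffding} with $\delta = s\sqrt{3\ln t/(2s)}=\sqrt{3s\ln t/2}$. A direct calculation gives
\begin{equation*}
\Pr\Bigl\{\,\bigl|\hat\mu_{k,a}^{(s)}-\mu_{k,a}\bigr|\ge \sqrt{3\ln t/(2s)}\,\Bigr\}\ \le\ 2\exp(-3\ln t)\ =\ 2t^{-3}.
\end{equation*}
Summing (union-bounding) over the at most $t$ possible values of $s$ yields a bound of $2t\cdot t^{-3}=2t^{-2}$ on the probability that the confidence interval fails for arm $(k,a)$ at round $t$, regardless of the realized value of $T_{k,a,t-1}$. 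A final union bound over the $|S|$ arms produces $2|S|t^{-2}$, which is exactly the claim.

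The main obstacle is the subtlety that $T_{k,a,t-1}$ is a random variable correlated with the samples themselves, so one cannot apply Hoeffding directly to $\hat\mu_{k,a,t-1}$ with $n=T_{k,a,t-1}$. The standard fix, which I would use, is the decoupling trick of imagining an infinite i.i.d.\ tape of rewards for each arm and defining $\hat\mu_{k,a}^{(s)}$ as the running mean of the first $s$ tape entries; the algorithm's empirical mean equals $\hat\mu_{k,a}^{(T_{k,a,t-1})}$, so failure of the confidence bound for the algorithm's mean implies failure for some fixed $s\le t$. Once this is set up, the rest is the routine Hoeffding-plus-union-bound calculation sketched above.
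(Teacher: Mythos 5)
Your proposal is correct and follows essentially the same route as the paper's proof: a union bound over arms, a further union bound over the possible values of the pull count $s$, and Hoeffding's inequality applied at each fixed $s$ to obtain the $2t^{-3}$ term. Your explicit treatment of the $T_{k,a,t-1}=0$ case and the i.i.d.\ tape decoupling makes rigorous a step the paper states only implicitly, but the argument is the same.
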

\begin{proof}
	For each round $t \ge 1$, we have
	\begin{align}
	\Pr\{\lnot \Ns_t\} & = \Pr\left\{\exists (k,a)\in S, |\hat\mu_{k,a,t-1} - \mu_{k,a}| \ge \sqrt{\frac{3\ln t}{2T_{k,a, t-1}}}\right\} \nonumber \\
	& \le\sum_{(k,a)\in S} \Pr\left\{|\hat\mu_{k,a,t-1} - \mu_{k,a}| \ge \sqrt{\frac{3\ln t}{2T_{k,a, t-1}}}\right\}. \nonumber \\
	& =\sum_{(k,a)\in S} \sum_{s=1}^{t-1}\Pr\left\{T_{k,a,t-1}=s, |\hat\mu_{k,a,t-1} - \mu_{k,a}| \ge \sqrt{\frac{3\ln t}{2T_{k,a, t-1}}}\right\}.  \label{eq:nicesample1}
	\end{align}
	When $T_{k,a,t-1}=s$, $\hat\mu_{k,a,t-1}$ is the average of $s$ i.i.d. random outcomes of arm $(k,a)$. With the Hoeffding's Inequality (Fact~\ref{fact:hoeffding}), we have
	\begin{align}
	\Pr\left\{T_{k,a,t-1}=s, |\hat\mu_{k,a,t-1} - \mu_{k,a}| \ge \sqrt{\frac{3\ln t}{2T_{k,a, t-1}}}\right\} \le 2 t^{-3},
	\label{eq:nicesample2}
	\end{align}
	Combining Eq.\eqref{eq:nicesample1} and \eqref{eq:nicesample2}, we have $\Pr\{\lnot \Ns_t\}\le 2|S|t^{-2}$.
\end{proof}
	We generally follow the proof of Theorem 4 in \cite{wang2017improving}, with the different definition of the base arm.
    We first introduce a positive real number $M_{k,a}$ for each arm $(k,a)$.
	Let $\calF_t$ be the event $\{r'(\bm{a}_t, \bar{\bm{\mu}}) < \alpha\cdot \opt(\bar\vmu)\}$, which represents the oracle fails in round $t$.
	Define $M_{\bm{a}} = \max_{(k,a)\in \bm{a}} M_{k,a}$ for each action $\bm{a}$.
	Define
	$$\kappa_{T}(M, s) = \begin{cases}
	2B, &\mbox{if } s=0,\\
	2B\sqrt{\frac{6 \ln T}{s}}, &\mbox{if } 1\le s\le \ell_{T}(M),\\
	0, &\mbox{if } s \ge \ell_{T}(M)+1,
	\end{cases}$$
	where
	$$\ell_{T}(M)=\left\lfloor\frac{24 B^2 Q^2 \ln T}{M^2}\right\rfloor.$$
	We then show that if $\{\Delta_{\bm{a}_t} \ge M_{\bm{a}_t}\}$, $\lnot \calF_t$ and $\Ns_t$ hold, we have
	\begin{equation}
	\Delta_{\bm{a}_t} \le \sum_{(k,a)\in \bm{a}_t}\kappa_T(M_{k,a}, T_{k,a, t-1}).
	\label{eq:1-norm.nontriggering.kappa}
	\end{equation}
	The right hand side of the inequality is non-negative, so it holds naturally if $\Delta_{\bm{a}_t}=0$. We only need to consider $\Delta_{\bm{a}_t}>0$.
	By $\Ns_t$ and $\lnot \calF_t$, we have
	$$r'(\bm{a}_t, \bar\vmu_t)\ge \alpha\cdot \opt(\bar\vmu_t)\ge \alpha\cdot \opt(\vmu) = r'(\bm{a}_t,\vmu) + \Delta_{\bm{a}_t}$$
	Then by Condition~\ref{cond: over 1-norm},
	$$\Delta_{\bm{a}_t}\le r'(\bm{a}_t, \bar\vmu_t)-r'(\bm{a}_t, \vmu)\le B\sum_{(k,a)\in \bm{a}_t} (\bar\mu_{k,a, t} - \mu_{k,a}).$$
	We are going to bound $\Delta_{\bm{a}_t}$ by bounding $\bar\mu_{k,a, t} - \mu_{k,a}$. We have
	\begin{align}
	\Delta_{\bm{a}_t}
	&\le B\sum_{(k,a)\in \bm{a}_t} (\bar\mu_{k,a, t} - \mu_{k,a})\nonumber\\
	&\le -M_{\bm{a}_t} + 2B\sum_{(k,a)\in \bm{a}_t} (\bar\mu_{k,a, t} - \mu_{k,a})\nonumber\\
	&\le 2B\sum_{(k,a)\in \bm{a}_t} \left[(\bar\mu_{k,a, t} - \mu_{k,a}) - \frac{M_{\bm{a}_t}}{2BQ}\right]\nonumber\\
	&\le 2B\sum_{(k,a)\in \bm{a}_t} \left[(\bar\mu_{k,a, t} - \mu_{k,a}) - \frac{M_{k,a}}{2BQ}\right]. \label{eq:nontriggering.transform}
	\end{align}
	By $\Ns_t$, we have $\bar\mu_{k,a, t} - \mu_{k,a} \le 2\rho_{k,a, t}$, so
	$$(\bar\mu_{k,a, t} - \mu_{k,a}) - \frac{M_{k,a}}{2BQ} \le 2\rho_{k,a, t} - \frac{M_{k,a}}{2BQ} \le 2\sqrt{\frac{3\ln T}{2T_{k,a, t-1}}} - \frac{M_{k,a}}{2BQ}.$$
	If $T_{k,a, t-1}\le \ell_T(M_{k,a})$, we have $(\bar\mu_{k,a, t} - \mu_{k,a}) - \frac{M_{k,a}}{2BQ} \le 2\sqrt{\frac{3\ln T}{2T_{k,a, t-1}}} \le \frac{1}{2B}\kappa_T(M_{k,a}, T_{k,a, t-1})$.
	If $T_{k,a, t-1}\ge \ell_T(M_{k,a})+1$, then $2\sqrt{\frac{3\ln T}{2T_{k,a,t-1}}} \le \frac{M_{k,a}}{2BQ}$, so $(\bar\mu_{k,a, t} - \mu_{k,a}) - \frac{M_{k,a}}{2BQ} \le 0 = \frac{1}{2B}\kappa_T(M_{k,a}, T_{k,a, t-1})$.
	In conclusion, we have
	$$\eqref{eq:nontriggering.transform} \le \sum_{(k,a)\in \bm{a}_t} \kappa_T(M_{k,a}, T_{k,a, t-1}).$$
	
	Then for all rounds,
	\begin{align*}
	\sum_{t=1}^T \I(\{\Delta_{\bm{a}_t} \ge M_{\bm{a}_t}\} \land \lnot \calF_t \land \Ns_t) \cdot \Delta_{\bm{a}_t}
	&\le \sum_{t=1}^T \sum_{(k,a)\in \bm{a}_t} \kappa_T(M_{k,a}, T_{k,a, t-1})\\
	&= \sum_{(k,a)\in S} \sum_{s=0}^{T_{k,a, T}} \kappa_T(M_{k,a}, s)\\
	&\le \sum_{(k,a)\in S} \sum_{s=0}^{\ell_{T}(M_{k,a})} \kappa_T(M_{k,a}, s)\\
	&= 2B|S| + \sum_{(k,a)\in S} \sum_{s=1}^{\ell_{T}(M_{k,a})} 2B\sqrt{\frac{6\ln T}{s}}\\
	&\le 2B|S| + \sum_{(k,a)\in S} \int_{s=0}^{\ell_{T}(M_{k,a})} 2B\sqrt{\frac{6\ln T}{s}} \mathrm{d} s\\
	&\le 2B|S| + \sum_{(k,a)\in S} 4B\sqrt{6\ln T \ell_T(M_{k,a})}\\
	&\le 2B|S| + \sum_{(k,a)\in S} \frac{48B^2Q\ln T}{M_{k,a}}.
	\end{align*}
	So
	\begin{align*}
	Reg(\{\Delta_{\bm{a}_t} \ge M_{\bm{a}_t}\} \land \lnot \calF_t \land \Ns_t)
	& = \E\left[\sum_{t=1}^T \I(\{\Delta_{\bm{a}_t} \ge M_{\bm{a}_t}\} \land \lnot \calF_t \land \Ns_t) \cdot \Delta_{\bm{a}_t}\right] \\
	& \le 2B|S| + \sum_{(k,a)\in S} \frac{48B^2Q\ln T}{M_{k,a}}.
	\end{align*}
	
	By Lemma~\ref{lem:ns}, $\Pr\{\lnot \Ns_t\} \le 2|S|t^{-2}$.
	Then we have
	$$Reg(\lnot \Ns_t) \le \sum_{t=1}^T 2|S|t^{-2} \cdot \Delta_{\max} \le \frac{\pi^2}{3} |S|\cdot \Delta_{\max},$$
	$$Reg(\calF_t) \le (1-\beta)T \cdot \Delta_{\max}.$$
	With these two bounds, we have
	\begin{align*}
	Reg(\{\})
	&\le Reg(\calF_t) + Reg(\lnot \Ns_t)
	+ Reg(\{\Delta_{\bm{a}_t} \ge M_{\bm{a}_t}\} \land \lnot \calF_t \land \Ns_t) + Reg(\Delta_{\bm{a}_t} < M_{\bm{a}_t})\\
	&\le (1-\beta)T \cdot \Delta_{\max} + \frac{\pi^2}{3} |S|\cdot \Delta_{\max}
	+ 2B|S| + \sum_{(k,a)\in S} \frac{48B^2Q\ln T}{M_{k,a}} + Reg(\Delta_{\bm{a}_t} < M_{\bm{a}_t}).
	\end{align*}
	Since $Reg_{\alpha, \beta}(T, \bm{D}) = Reg(\{\}) - (1-\beta)T\cdot \Delta_{\max}$,
	$$Reg_{\alpha, \beta}(T, \bm{D}) \le \frac{\pi^2}{3}|S| \cdot \Delta_{\max}
	+ 2B|S| + \sum_{(k,a)\in S} \frac{48B^2Q\ln T}{M_{k,a}} + Reg(\Delta_{\bm{a}_t} < M_{\bm{a}_t}).$$
	For the distribution-dependent bound, take $M_{k,a}=\Delta_{\min}^{k,a}$,
	then $Reg(\Delta_{\bm{a}_t} < M_{\bm{a}_t}) = 0$ and we have
	$$Reg_{\alpha, \beta}(T, \bm{D}) \le \sum_{(k,a)\in S} \frac{48B^2Q\ln T}{\Delta_{\min}^{k,a}} + 2BKN + \frac{\pi^2}{3}\cdot KN \cdot \Delta_{\max}.$$
	For the distribution-independent bound, take $M_{k,a}=M=\sqrt{(48B^2QKN\ln T)/T}$,
	then $Reg(\Delta_{\bm{a}_t} < M_{\bm{a}_t}) \le TM$ and we have
	\begin{align*}
	Reg_{\alpha, \beta}(T, \bm{D})
	&\le \sum_{(k,a)\in S} \frac{48B^2Q\ln T}{M_{k,a}} + 2BKN + \frac{\pi^2}{3}\cdot KN \cdot \Delta_{\max} + Reg(\Delta_{\bm{a}_t} < M_{\bm{a}_t})\\
	&\le \frac{48B^2QKN\ln T}{M} + 2BKN + \frac{\pi^2}{3}\cdot KN \cdot \Delta_{\max} + TM\\
	&= 2\sqrt{48B^2QKNT\ln T} + \frac{\pi^2}{3} \cdot KN \cdot \Delta_{\max} + 2BKN\\
	&\le 14B\sqrt{QKNT\ln T} + \frac{\pi^2}{3} \cdot KN \cdot \Delta_{\max} + 2BKN.
	\end{align*}
\end{proof}
\end{document}